\title{Lifting Structural Tractability to CSP with Global Constraints}
\author{
Evgenij Thorstensen\thanks{Work supported by EPSRC grant EP/G055114/1}
}
\institute{
Department of Computer Science, University of Oxford, UK\\ 
\email{evgenij.thorstensen@cs.ox.ac.uk}
}
\newcommand{\tw}{\ensuremath{\mathsf{tw}}}
\newcommand{\hw}{\ensuremath{\mathsf{hw}}}
\newcommand{\fhw}{\ensuremath{\mathsf{fhw}}}
\newcommand{\subw}{\ensuremath{\mathsf{subw}}}
\newcommand{\vars}{\ensuremath{\mathcal{V}}}
\newcommand{\sol}{\ensuremath{\mathsf{sol}}}
\newcommand{\CSP}{\ensuremath{\textup{CSP}}}
\newcommand{\hyp}{\ensuremath{\mathsf{hyp}}}
\newcommand{\iv}{\ensuremath{\mathsf{iv}}}
\newcommand{\ic}{\ensuremath{\mathsf{ic}}}
\newcommand{\Pj}{\ensuremath{\mathsf{pj}}}
\newcommand{\Extlang}{\ensuremath{\mathbf{Ext}}}
\DeclarePairedDelimiter{\tup}{\langle}{\rangle}
\newcommand{\languageword}{catalogue}
\newcommand{\languagesymbol}{\ensuremath{\Gamma}}
\crefname{chapter}{Chapter}{Chapters}
\crefname{section}{Section}{Sections}
\crefname{subsection}{Section}{Sections}
\crefname{definition}{Definition}{Definitions}
\crefname{theorem}{Theorem}{Theorems}
\crefname{lemma}{Lemma}{Lemmata}
\crefname{propositions}{Proposition}{Propositions}
\crefname{corollary}{Corollary}{Corollaries}
\crefname{property}{Property}{Properties}
\crefname{conjecture}{Conjecture}{Conjectures}
\crefname{example}{Example}{Examples}
\crefname{problem}{Problem}{Problems}
\crefname{notation}{Notation}{Notation}
\crefname{algo}{Algorithm}{Algorithms}
\crefname{algorithm}{Algorithm}{Algorithms}
\crefname{figure}{Figure}{Figures}
\begin{document}

\maketitle

\begin{abstract}
  A wide range of problems can be modelled as constraint satisfaction
  problems (CSPs), that is, a set of constraints that must be
  satisfied simultaneously. Constraints can either be represented
  extensionally, by explicitly listing allowed combinations of values,
  or implicitly, by special-purpose algorithms provided by a
  solver. Such implicitly represented constraints, known as global
  constraints, are widely used; indeed, they are one of the key
  reasons for the success of constraint programming in solving
  real-world problems.

  In recent years, a variety of restrictions on the structure of CSP
  instances that yield tractable classes have been
  identified. However, many such restrictions fail to guarantee
  tractability for CSPs with global constraints. In this paper, we
  investigate the properties of extensionally represented constraints
  that these restrictions exploit to achieve tractability, and show
  that there are large classes of global constraints that also possess
  these properties. This allows us to lift these restrictions to the
  global case, and identify new tractable classes of CSPs with global
  constraints.
\end{abstract}

\section{Introduction}

Constraint programming (CP) is widely used to solve a variety of
practical problems such as planning and scheduling
\cite{vanHoeve2006:global-const,Wallace96practicalapplications}, and
industrial configuration
\cite{pup-cpaior2011,bin-repack-datacenters}. Constraints can either
be represented explicitly, by a table of allowed assignments, or
implicitly, by specialized algorithms provided by the constraint
solver. These algorithms may take as a parameter a \emph{description}
that specifies exactly which kinds of assignments a particular
instance of a constraint should allow. Such implicitly represented
constraints are known as global constraints, and a lot of the success
of CP in practice has been attributed to solvers providing them
\cite{Rossi06:handbook,Gent06:minion,wallace97-eclipse}.

The theoretical properties of constraint problems, in particular the
computational complexity of different types of problem, have been
extensively studied and quite a lot is known about what restrictions
on the general \emph{constraint satisfaction problem} are sufficient
to make it tractable
\cite{struct-decomp-stacs-et,bulatov05:classifying-constraints,CohenUnifTract,compar-decomp-csp,grohe-hom-csp-complexity,stoc-Marx10-submod-width}. In
particular, many structural restrictions, that is, restrictions on how
the constraints in a problem interact, have been identified and shown
to yield tractable classes of CSP instances
\cite{Gottlob02:jcss-hypertree,grohe-marx-frac-edge-cover,stoc-Marx10-submod-width}. However,
much of this theoretical work has focused on problems where each
constraint is explicitly represented, and most known structural
restrictions fail to yield tractable classes for problems with global
constraints, even when the global constraints are fairly simple
\cite{Kutz08:sim-match}.

Theoretical work on global constraints has to a large extent focused
on developing efficient algorithms to achieve various kinds of local
\emph{consistency} for individual constraints.  This is generally done
by pruning from the domains of variables those values that cannot lead
to a satisfying assignment
\cite{Walsh07:global,Samer11:constraints-tractable}.  Another strand
of research has explored conditions that allow global
constraints to be replaced by collections of explicitly represented constraints
\cite{bess-nvalue}. These techniques allow faster implementations of
algorithms for \emph{individual constraints}, but do not shed much
light on the complexity of problems with multiple \emph{overlapping}
global constraints, which is something that practical problems
frequently require.

As such, in this paper we investigate what properties of explicitly
represented constraints structural restrictions rely on to guarantee
tractability. Identifying such properties will allow us to find global
constraints that also possess them, and lift well-known
structural restrictions to instances with such constraints.

As discussed in~\cite{ChenGroheCompactRel}, when the constraints in a
family of problems have unbounded arity, the way that the constraints
are {\em represented} can significantly affect the complexity.
Previous work in this area has assumed that the global constraints
have specific representations, such as
propagators~\cite{Green08:cp-structural}, negative
constraints~\cite{gdnf-cohen-repr}, or GDNF/decision
diagrams~\cite{ChenGroheCompactRel}, and exploited properties
particular to that representation. In contrast, we will use a
definition of global constraints that allows us to discuss different
representations in a uniform manner. Furthermore, as the results we
obtain will rely on a relationship between the size of a global
constraint and the number of its satisfying assignments, we do not
need to reference any specific representation. 

As a running example, we will use the connected graph partition
problem (CGP) \cite[p.~209]{Garey79:intractability}, defined
below. The CGP is the problem of partitioning the vertices of a graph
into bags of a given size while minimizing the number of edges that
span bags. The vertices of the graph could represent components to be
placed on circuit boards while minimizing the number of inter-board
connections.

\begin{problem}[Connected graph partition (CGP)]
  \label{prob:cgp}
  We are given an undirected and connected graph $\tup{V, E}$, as well
  as $\alpha, \beta \in \mathbb N$. Can $V$ be partitioned into
  disjoint sets $V_1,\ldots,V_m$ with $|V_i| \leq \alpha$ such that
  the set of broken edges $E' = \{\{u, v\} \in E \mid u \in V_i, v \in
  V_j, i\not=j\}$ has cardinality $\beta$ or less?
\end{problem}

This problem is \NP-complete \cite[p.~209]{Garey79:intractability},
even for fixed $\alpha \geq 3$. We are going to use the results in
this paper to show a new result, namely that the CGP is tractable for
every fixed $\beta$.

\section{Global Constraints}

In this section, we define the basic concepts that we will use
throughout the paper. In particular, we give a precise definition of
global constraints, and illustrate it with a few examples.

\begin{definition}[Variables and assignments]
  Let $V$ be a set of variables, each with an associated set of domain
  elements. We denote the set of domain elements (the domain) of a
  variable $v$ by $D(v)$.
  We extend this notation to arbitrary subsets of variables, $W$, 
  by setting $D(W) = \displaystyle\bigcup_{v \in W} D(v)$.

  An {\em assignment} of a set of variables $V$ is a function $\theta
  : V \rightarrow D(V)$ that maps every $v \in V$ to an element
  $\theta(v) \in D(v)$.  We denote the restriction of $\theta$ to a
  set of variables $W \subseteq V$ by $\theta|_W$. We also allow the
  special assignment $\bot$ of the empty set of variables. In
  particular, for every assignment $\theta$, we have
  $\theta|_\emptyset = \bot$.
\end{definition}

\begin{definition}[Projection]
  Let $\Theta$ be a set of assignments of a set of variables $V$. The
  \emph{projection} of $\Theta$ onto a set of variables $X \subseteq
  V$ is the set of assignments $\pi_X(\Theta) = \{\theta|_X \mid
  \theta \in \Theta\}$.
\end{definition}
   
Note that when $\Theta = \emptyset$ we have $\pi_X(\Theta) =
\emptyset$, but when $X = \emptyset$ and $\Theta \neq \emptyset$, we
have $\pi_X(\Theta) = \{\bot\}$.

\begin{definition}[Disjoint union of assignments]
 \label{def:disjoint-union}
 Let $\theta_1$ and $\theta_2$ be two assignments of disjoint sets of
 variables $V_1$ and $V_2$, respectively. The \emph{disjoint union} of
 $\theta_1$ and $\theta_2$, denoted $\theta_1 \oplus \theta_2$, is the
 assignment of $V_1 \cup V_2$ such that $(\theta_1 \oplus \theta_2)(v)
 = \theta_1(v)$ for all $v \in V_1$, and $(\theta_1 \oplus
 \theta_2)(v) = \theta_2(v)$ for all $v \in V_2$.
\end{definition}

Global constraints have traditionally been defined, somewhat vaguely,
as constraints without a fixed arity, possibly also with a compact
representation of the constraint relation. For example, in
\cite{vanHoeve2006:global-const} a global constraint is defined as ``a
constraint that captures a relation between a non-fixed number of
variables''.

Below, we offer a precise definition similar to the one in
\cite{Walsh07:global}, where the authors define global constraints for
a domain $D$ over a list of variables $\sigma$ as being given
intensionally by a function $D^{|\sigma|} \rightarrow \{0, 1\}$
computable in polynomial time. Our definition differs from this one in
that we separate the general {\em algorithm} of a global constraint
(which we call its {\em type}) from the specific description.  This
separation allows us a better way of measuring the size of a global
constraint, which in turn helps us to establish new complexity
results.

\begin{definition}[Global constraints]
  \label{def:glob-const}
  A \emph{global constraint type} is a parameterized polynomial-time
  algorithm that determines the acceptability of an assignment of a
  given set of variables.

  Each global constraint type, $e$, has an associated set of 
  \emph{descriptions}, $\Delta(e)$. Each description $\delta \in \Delta(e)$
  specifies appropriate parameter values for the algorithm $e$. 
  In particular, each $\delta \in \Delta(e)$ specifies a set of
  variables, denoted by $\vars(\delta)$.

  A \emph{global constraint} $e[\delta]$, where $\delta \in
  \Delta(e)$, is a function that maps assignments of $\vars(\delta)$
  to the set $\{0,1\}$.  Each assignment that is allowed by
  $e[\delta]$ is mapped to 1, and each disallowed assignment is mapped
  to 0.  The \emph{extension} or \emph{constraint relation} of
  $e[\delta]$ is the set of assignments, $\theta$, of $\vars(\delta)$
  such that $e[\delta](\theta) = 1$. We also say that such assignments
  \emph{satisfy} the constraint, while all other assignments
  \emph{falsify} it.
\end{definition}

When we are only interested in describing the set of assignments that
satisfy a constraint, and not in the complexity of determining
membership in this set, we will sometimes abuse notation by writing
$\theta \in e[\delta]$ to mean $e[\delta](\theta) = 1$.

As can be seen from the definition above, a global constraint is not
usually explicitly represented by listing all the assignments that
satisfy it. Instead, it is represented by some description $\delta$
and some algorithm $e$ that allows us to check whether the constraint
relation of $e[\delta]$ includes a given assignment. To stay within
the complexity class \NP, this algorithm is required to run in
polynomial time. As the algorithms for many common global constraints
are built into modern constraint solvers, we measure the {\em size} of
a global constraint's representation by the size of its description.

\begin{example}[EGC]
  \label{example:egc}
  A very general global constraint type is the \emph{extended global
    cardinality} constraint type~\cite{Samer11:constraints-tractable}.
  This form of global constraint is defined by specifying for every
  domain element $a$ a finite set of natural numbers $K(a)$, called
  the cardinality set of $a$. The constraint requires that the number
  of variables which are assigned the value $a$ is in the set $K(a)$,
  for each possible domain element $a$.

  Using our notation, the description $\delta$ of an EGC global
  constraint specifies a function $K_\delta : D(\vars(\delta))
  \rightarrow \mathcal P(\mathbb N)$ that maps each domain element to
  a set of natural numbers.  The algorithm for the EGC constraint then
  maps an assignment $\theta$ to $1$ if and only if, for every domain
  element $a \in D(\vars(\delta))$, we have that $|\{v \in
  \vars(\delta) \mid \theta(v) = a\}| \in K_\delta(a)$.

  % To simplify the notation, when specifying a description $\delta$ of
  % a counting constraint, for any domain element $a$ not mentioned we
  % assume $\delta(a) = \{0\}$.
\end{example}

% \begin{example}[Clauses]
%   \label{example:clauses}
%   We can view the disjunctive clauses used to define propositional satisfiability
%   problems as a global constraint type in the following way.
  
%   The description $\delta$ of a clause is simply a list of the literals that it contains,
%   and $\vars(\delta)$ is the corresponding set of variables.
%   The algorithm for the clause then maps any Boolean assignment 
%   $\theta$ of $\vars(\delta)$ that
%   satisfies the disjunction of the literals in $\delta$ to 1, 
%   and all other assignments to 0.
  
%   Note that a clause forbids precisely one assignment to
%   $\vars(\delta)$ (the one that falsifies all of the literals in the
%   clause).  Hence the extension of a clause contains
%   $2^{|\vars(\delta)|}-1$ assignments, so the size of the constraint
%   relation grows exponentially with the number of variables, but the
%   size of the constraint description grows only linearly.
%  \end{example}

\begin{example}[Table and negative constraints]
  \label{example:table-const}
  A rather degenerate example of a a global constraint type is 
  the \emph{table} constraint.
  
  In this case the description $\delta$ is simply a list of assignments 
  of some fixed set of variables, $\vars(\delta)$. The algorithm for
  a table constraint then decides, for any
  assignment of $\vars(\delta)$, whether it is included in $\delta$. 
  This can be done in a time which is linear in the size of $\delta$ 
  and so meets the polynomial time requirement. 

  {\em Negative} constraints are complementary to table constraints,
  in that they are described by listing {\em forbidden}
  assignments. The algorithm for a negative constraint $e[\delta]$
  decides, for any assignment of $\vars(\delta)$, whether whether it
  is {\em not} included in $\delta$. Observe that disjunctive clauses,
  used to define propositional satisfiability problems, are a special
  case of the negative constraint type, as they have exactly one
  forbidden assignment.
  
  We observe that any global constraint can be rewritten as a table or
  negative constraint. However, this rewriting will, in general, incur
  an exponential increase in the size of the description.
\end{example}

As can be seen from the definition above, a table global constraint is
explicitly represented, and thus equivalent to the usual notion of an
explicitly represented constraint.

\begin{definition}[CSP instance]
  An instance of the constraint satisfaction problem (CSP) is a pair
  $\tup{V, C}$ where $V$ is a finite set of \emph{variables}, and $C$
  is a set of \emph{global constraints} such that for every $e[\delta]
  \in C$, $\vars(\delta) \subseteq V$.  In a CSP instance, we call
  $\vars(\delta)$ the \emph{scope} of the constraint $e[\delta]$.

  A \emph{classic} CSP instance is one where every constraint is a
  table constraint.

  A \emph{solution} to a CSP instance $P = \tup{V, C}$ is an
  assignment $\theta$ of $V$ which satisfies every global constraint,
  i.e., for every $e[\delta] \in C$ we have $\theta|_{\vars(\delta)}
  \in e[\delta]$. We denote the set of solutions to $P$ by $\sol(P)$.

  The \emph{size} of a CSP instance $P = \tup{V, C}$ is $|P| = |V| +
  \displaystyle\sum_{v \in V} |D(v)| + \displaystyle\sum_{e[\delta]
    \in C} |\delta|$.
\end{definition}

\begin{example}[The CGP encoded with global constraints]
  \label{example:cgp-as-csp}
  Given a connected graph $G = \tup{V, E}$, $\alpha$, and $\beta$, we
  build a CSP instance $\tup{A \cup B, C}$ as follows. The set $A$
  will have a variable $v$ for every $v \in V$ with domain $D(v) =
  \{1,\ldots,|V|\}$, while the set $B$ will have a boolean variable
  $e$ for every edge in $E$.

  The set of constraints $C$ will have an EGC constraint $C^{\alpha}$
  on $A$ with $K(i) = \{0,\ldots,\alpha\}$ for every $1 \leq i \leq
  |V|$. Likewise, $C$ will have an EGC constraint $C^\beta$ on $B$
  with $K(0) = \{0,\ldots,|E|\}$ and $K(1) = \{1,\ldots,\beta\}$.

  Finally, to connect $A$ and $B$, the set $C$ will have for every
  edge $\{u, v\} \in E$, with corresponding variable $e \in B$, a
  table constraint on $\{u, v, e\}$ requiring $u \not= v \rightarrow
  e=1$.

  As an example, \cref{fig:cgp-ext-example} shows this encoding for
  the CGP on the graph $C_5$, that is, a simple cycle on five
  vertices.
\end{example}

This encoding follows the definition of \cref{prob:cgp} quite closely,
and can be done in polynomial time.

\begin{figure}[h]
  \centering
  \includegraphics[width=0.8\textwidth]{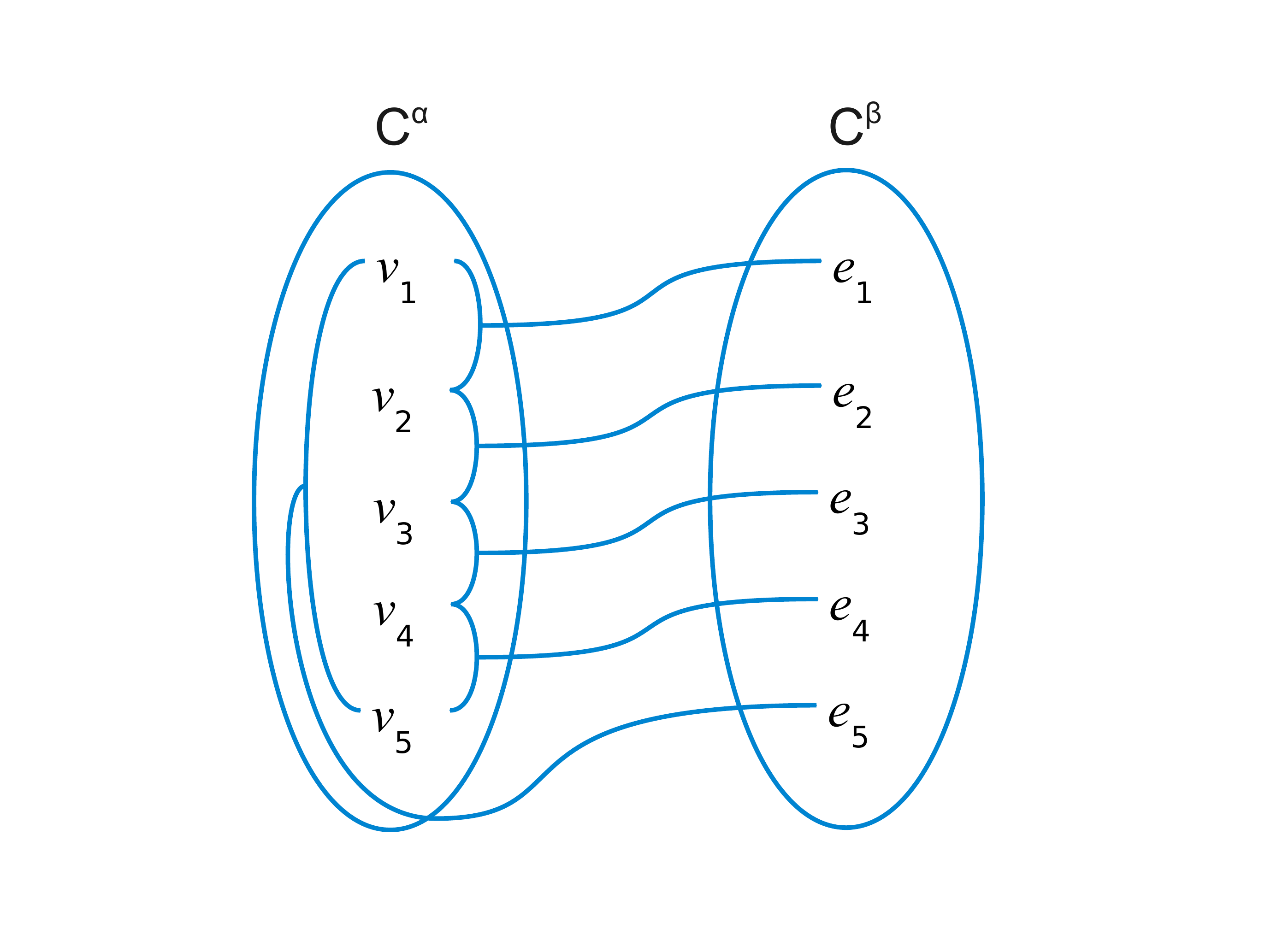}
  \caption{CSP encoding of the CGP on the graph $C_5$.}
  \label{fig:cgp-ext-example}
\end{figure}

\section{Structural Restrictions}

% Hypertree, frac. hypertree, and submod. width

In recent years, there has been a flurry of research into identifying
tractable classes of classic CSP instances based on restrictions on
the hypergraphs of CSP instances, known as structural restrictions.
Below, we present and discuss a few representative examples. To
present the various structural restrictions, we will use the framework
of width functions, introduced by Adler \cite{adler-thesis}.

\begin{definition}[Hypergraph]
  A hypergraph $\tup{V, H}$ is a set of vertices $V$ together with a
  set of hyperedges $H \subseteq \mathcal P(V)$.

  Given a CSP instance $P = \tup{V, C}$, the hypergraph of $P$,
  denoted $\hyp(P)$, has vertex set $V$ together with a hyperedge
  $\vars(\delta)$ for every $e[\delta] \in C$.
\end{definition}

\begin{definition}[Tree decomposition]
  A \emph{tree decomposition} of a hypergraph $\tup{V, H}$ is a pair
  $\langle T, \lambda \rangle$ where $T$ is a tree and $\lambda$ is a
  labelling function from nodes of $T$ to subsets of $V$, such that
  \begin{enumerate}
  \item for every $v \in V$, there exists a node $t$ of $T$ such that
    $v \in \lambda(t)$,
  \item for every hyperedge $h \in H$, there exists a node $t$ of $T$
    such that $h \subseteq \lambda(t)$, and
  \item for every $v \in V$, the set of nodes $\{ t \mid v \in
    \lambda(t) \}$ induces a connected subtree of $T$.
  \end{enumerate}
\end{definition}

\begin{definition}[Width function]
  Let $G = \tup{V, H}$ be a hypergraph. A \emph{width function} on $G$
  is a function $f : \mathcal P(V) \rightarrow \mathbb R^+$ that
  assigns a positive real number to every nonempty subset of vertices
  of $G$. A width function $f$ is monotone if $f(X) \leq f(Y)$
  whenever $X \subseteq Y$.

  Let $\tup{T, \lambda}$ be a tree decomposition of $G$, and $f$ a
  width function on $G$. The \emph{$f$-width} of $\tup{T, \lambda}$ is
  $\max(\{f(\lambda(t)) \mid t \mbox{ node of } T\})$. The
  \emph{$f$-width} of $G$ is the minimal $f$-width over all its tree
  decompositions.
\end{definition}

In other words, a width function on a hypergraph $G$ tells us how to
assign weights to nodes of tree decompositions of $G$.

\begin{definition}[Treewidth]
  Let $f(X) = |X|-1$. The treewidth $\tw(G)$ of a hypergraph $G$ is
  the $f$-width of $G$.
\end{definition}

Let $G = \tup{V, H}$ be a hypergraph, and $X \subseteq V$. An edge
cover for $X$ is any set of hyperedges $H' \subseteq H$ that satisfies
$X \subseteq \bigcup H'$. The edge cover number $\rho(X)$ of $X$ is
the size of the smallest edge cover for $X$. It is clear that $\rho$
is a width function.

\begin{definition}[\protect{\cite[Chapter 2]{adler-thesis}}]
  The generalized hypertree width $\hw(G)$ of a hypergraph $G$ is the
  $\rho$-width of $G$.
\end{definition}

Next, we define a relaxation of hypertree width known as fractional
hypertree width, introduced by Grohe and Marx
\cite{grohe-marx-frac-edge-cover}.

\begin{definition}[Fractional edge cover]
  Let $G = \tup{V, H}$ be a hypergraph, and $X \subseteq V$. A
  \emph{fractional edge cover} for $X$ is a function $\gamma : H
  \rightarrow [0, 1]$ such that $\displaystyle\sum_{v \in h \in H}
  \gamma(h) \geq 1$ for every $v \in X$. We call $\displaystyle\sum_{h
    \in H} \gamma(h)$ the weight of $\gamma$. The \emph{fractional
    edge cover number} $\rho^*(X)$ of $X$ is the minimum weight over
  all fractional edge covers for $X$. It is known that this minimum is
  always rational \cite{grohe-marx-frac-edge-cover}.
\end{definition}

\begin{definition}
  The \emph{fractional hypertree width} $\fhw(G)$ of a hypergraph $G$
  is the $\rho^*$-width of $G$.
\end{definition}

For a class of hypergraphs $\mathcal H$ and a notion of width
$\alpha$, we write $\alpha(\mathcal H)$ for the maximal $\alpha$-width
over the hypergraphs in $\mathcal H$. If this is unbounded we write
$\alpha(\mathcal H) = \infty$; otherwise $\alpha(\mathcal H) <
\infty$.

All the above restrictions can be used to guarantee tractability for
classes of CSP instances where all constraints are table
constraints.

\begin{theorem}[\cite{Dalmau02:csp-tractability-cores,Gottlob02:jcss-hypertree,grohe-marx-frac-edge-cover}]
  \label{thm:width-tract}
  Let $\mathcal H$ be a class of hypergraphs. For every $\alpha \in
  \{\hw,\fhw\}$, any class of classic CSP instances whose
  hypergraphs are in $\mathcal H$ is tractable if $\alpha(\mathcal H)
  < \infty$.
\end{theorem}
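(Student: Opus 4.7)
The plan is to convert the CSP instance into an equivalent acyclic CSP instance of polynomial size and solve the latter by the classical algorithm for acyclic CSPs. Fix $k$ with $\alpha(\mathcal H) \leq k$, and let $P = \tup{V, C}$ be an input classic CSP instance with $\hyp(P) \in \mathcal H$. I would first compute (or approximate in polynomial time) a tree decomposition $\tup{T, \lambda}$ of $\hyp(P)$ whose $\alpha$-width is at most $k' = O(\mathrm{poly}(k))$; in the $\hw$ case this is known to be achievable up to a constant factor, and for $\fhw$ by Marx's approximation algorithm. Each bag $\lambda(t)$ then comes equipped with a witness (fractional) edge cover of weight at most $k'$.

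Second, for each node $t$ of $T$ I would compute the set $R_t$ of partial solutions on $\lambda(t)$, i.e., assignments of $\lambda(t)$ that satisfy every table constraint whose scope is contained in $\lambda(t)$. For $\alpha = \hw$ this amounts to joining the at most $k'$ tables covering $\lambda(t)$ and filtering the result: each table has at most $|P|$ tuples, so the join has size at most $|P|^{k'}$ and can be computed and projected onto $\lambda(t)$ in time $|P|^{O(k')}$. For $\alpha = \fhw$ the AGM/Grohe--Marx inequality gives $|R_t| \leq |P|^{\rho^*(\lambda(t))} \leq |P|^{k'}$, and this bound is realized algorithmically by any worst-case optimal join procedure, again in time $|P|^{O(k')}$.

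Third, I would define an acyclic CSP instance $P'$ with one variable $x_t$ per node of $T$ with domain $R_t$, and one table constraint on $\{x_t, x_{t'}\}$ for each tree edge, requiring that the two assignments agree on $\lambda(t) \cap \lambda(t')$. The connectedness condition in the definition of a tree decomposition ensures that solutions of $P$ correspond bijectively to solutions of $P'$ via $\theta \mapsto (\theta|_{\lambda(t)})_{t}$, and $P'$ can be solved in time polynomial in $\sum_t |R_t|$ by the standard algorithm for acyclic CSPs (for instance, by bottom-up elimination along $T$).

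The main obstacle is the second step: both the size bound on and the efficient computation of each $R_t$. In the $\hw$ case both are essentially elementary counting and relational-algebra manipulations, but in the $\fhw$ case the size bound rests on the nontrivial AGM inequality and realizing that bound algorithmically requires a worst-case optimal join algorithm rather than a naive pairwise join. Obtaining a decomposition of sufficiently small width is also delicate, since exact computation of $\hw$ is \textup{NP}-hard, but the polynomial-time approximations available for both $\hw$ and $\fhw$ suffice to carry the argument through.
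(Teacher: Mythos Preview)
The paper does not prove this theorem; it is quoted with citations and used as a known result. In the section that follows, the paper does recall two ingredients of the Grohe--Marx argument for the $\fhw$ case --- the solution bound $|\sol(P)| \le |P|^{\fhw(\hyp(P))}$ and the recursive variable-by-variable enumeration procedure \textsc{EnumSolutions} --- but these are introduced to motivate the paper's notion of sparse intersections, not to reprove the theorem.

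Your sketch is correct and is the standard tree-decomposition-plus-Yannakakis argument. The only noteworthy divergence from the cited Grohe--Marx proof is how you propose to materialise each bag relation $R_t$ in the $\fhw$ case: you invoke a worst-case-optimal join, whereas Grohe and Marx (and the paper's exposition) realise the same $|P|^{O(k')}$ bound via the recursive enumeration, which relies directly on the solution-count lemma and predates the WCOJ literature. Either route works; yours is more modular, theirs stays within the machinery actually available in the cited papers. Your handling of the decomposition step (constant-factor approximation for $\hw$, Marx's cubic approximation for $\fhw$) is also in line with what those references provide.
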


To go beyond fractional hypertree width, Marx
\cite{stoc-Marx10-submod-width,DBLP:journals/corr/abs-0911-0801}
recently introduced the concept of submodular width. This concept uses
a set of width functions satisfying a condition (submodularity), and
considers the $f$-width of a hypergraph for every such function $f$.

\begin{definition}[Submodular width function]
  Let $G = \tup{V, H}$ be a hypergraph. A width function $f$ on $G$ is
  \emph{submodular} if for every set $X, Y \subseteq V$, we have $f(X)
  + f(Y) \geq f(X \cap Y) + f(X \cup Y)$.
\end{definition}

\begin{definition}[Submodular width]
  Let $G$ be a hypergraph. The \emph{submodular width} $\subw(G)$ of
  $G$ is the maximum $f$-width of $G$ taken over all monotone
  submodular width functions $f$ on $G$.

  For a class of hypergraphs $\mathcal H$, we write $\subw(\mathcal
  H)$ for the maximal submodular width over the hypergraphs in
  $\mathcal H$. If this is unbounded we write $\subw(\mathcal H) =
  \infty$; otherwise $\subw(\mathcal H) < \infty$.
\end{definition}

Unlike for fractional hypertree width and every other structural
restriction discussed so far, the running time of the algorithm given
by Marx for classic CSP instances with bounded submodular width has an
exponential dependence on the number of vertices in the hypergraph of
the instance. The class of classic CSP instances with bounded
submodular width is therefore not tractable. However, this class is
what is called fixed-parameter tractable
\cite{downey2012parameterized,flum2006parameterized}.

% \begin{theorem}[\cite{DBLP:journals/corr/abs-0911-0801}]
%   \label{thm:submodular-width}
%   Let $P$ be a classic CSP instance with $\subw(\hyp(P)) =
%   k$. Deciding whether $P$ has a solution can be done in time $O(2^{k
%     \times 2^{O(n)}} \times |P|^{O(k)})$, where $n$ is the number of
%   vertices of $\hyp(P)$.
% \end{theorem}

% As we can see, the running time has an exponential dependence on the
% number of vertices in the hypergraph of the instance. The class of
% classic CSP instances with bounded submodular width is therefore not
% tractable. However, this class is what is called fixed-parameter
% tractable \cite{downey2012parameterized,flum2006parameterized}.

\begin{definition}[Fixed-parameter tractable]
  A \emph{parameterized problem instance} is a pair $\tup{k, P}$,
  where $P$ is a problem instance, such as a CSP instance, and $k \in
  \mathbb N$ a parameter.

  Let $S$ be a class of parameterized problem instances. We say that
  $S$ is \emph{fixed-parameter tractable} (in \FPT) if there is a
  function $f$ of one argument, as well as a constant $c$, such that
  every problem $\tup{k, P} \in S$ can be solved in time $O(f(k)
  \times |P|^c)$.
\end{definition}

The function $f$ can be arbitrary, but must only depend on the
parameter $k$. For CSP instances, a natural parameterization is by the
size of the hypergraph of an instance, measured by the number of
vertices. Since the hypergraph of an instance has a vertex for every
variable, for every CSP instance $P = \tup{V, C}$ we consider the
parameterized instance $\tup{|V|, P}$.

\begin{theorem}[\cite{DBLP:journals/corr/abs-0911-0801}]
  \label{cor:subw}
  Let $\mathcal H$ be a class of hypergraphs. If $\subw(\mathcal H) <
  \infty$, then a class of classic CSP instances whose hypergraphs are
  in $\mathcal H$ is in \FPT{}.
\end{theorem}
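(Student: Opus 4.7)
The plan is to invoke Marx's algorithmic framework for submodular width and reduce the problem to polynomially many instances of bounded fractional hypertree width, which can then be handled by \cref{thm:width-tract}. Fix a CSP instance $P = \tup{V, C}$ with $\hyp(P) \in \mathcal H$, so that $\subw(\hyp(P)) \leq w$ for the fixed constant $w$ bounding $\subw(\mathcal H)$.

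The first step is to associate to $P$ an instance-dependent weight function $f_P : \mathcal P(V) \to \mathbb R^+$, derived from the sizes of projections of the constraint relations of $P$ (or suitable entropy-theoretic surrogates thereof). Shearer's lemma, or the closely related submodular inequality for entropy, guarantees that $f_P$ is monotone and submodular. By the hypothesis $\subw(\hyp(P)) \leq w$, there is then a tree decomposition of $\hyp(P)$ of $f_P$-width at most $w$. Although $f_P$ depends on $P$, one can enumerate from $\hyp(P)$ alone a family of candidate tree decompositions that must contain at least one witness for every monotone submodular weight function; the enumeration takes time depending only on $|V|$ and $w$.

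The second step applies Marx's splitting procedure to this candidate family: one partitions $P$ into sub-instances $P_1, \dots, P_m$ with $\sol(P) = \bigcup_i \sol(P_i)$, such that each $\hyp(P_i)$ admits a tree decomposition of fractional hypertree width bounded by a function of $w$. Both $m$ and the splitting construction depend only on $|V|$ and $w$. Each $P_i$ is then a classic CSP instance with bounded $\fhw$, solved in polynomial time by \cref{thm:width-tract}. Totalling the costs yields a running time of the form $g(|V|) \cdot |P|^{O(1)}$, witnessing membership in \FPT{} with parameter $|V|$.

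The hard part is the splitting step: one must design the $P_i$ so that no solution of $P$ is lost and none is introduced, while ensuring that each resulting hypergraph really does admit a small fractional edge cover for every bag of some tree decomposition. This is the technical core of Marx's argument; it translates the $f_P$-width bound on $\hyp(P)$ into genuine fractional cover bounds on each sub-hypergraph by combining the submodular inequality with Shearer-type size estimates for joins of restricted constraint relations.
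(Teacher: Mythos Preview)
The paper does not prove this theorem at all: it is stated with a citation to Marx's work \cite{DBLP:journals/corr/abs-0911-0801} and used as a black box, with no accompanying proof environment. So there is no ``paper's own proof'' to compare against; you are supplying an argument where the paper simply invokes an external result.

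As a sketch of Marx's argument, your outline is broadly faithful in spirit: one does associate to the instance a monotone, edge-dominated submodular function built from logarithms of projection sizes, invoke the bounded submodular width hypothesis to obtain a good tree decomposition for that particular function, and then pass to sub-instances of bounded fractional hypertree width which are solved via \cref{thm:width-tract}. The point you flag as ``the hard part'' --- the splitting/uniformization step --- is indeed the technical heart of Marx's proof. One caveat: your description of ``enumerating from $\hyp(P)$ alone a family of candidate tree decompositions'' is not quite how Marx proceeds. His algorithm does not enumerate decompositions up front; rather, it performs an adaptive consistency-based splitting of the relations themselves, and the existence of a suitable decomposition for each resulting piece is established a posteriori. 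Your variant (brute-force over decompositions, which is legitimate since their number depends only on $|V|$) could in principle be made to work, but it is a different organization of the argument than the cited source.
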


The three structural restrictions that we have just presented form a
hierarchy
\cite{grohe-marx-frac-edge-cover,DBLP:journals/corr/abs-0911-0801}:
For every hypergraph $G$, $\subw(G) \leq \fhw(G) \leq \hw(G)$.

As the example below demonstrates, \cref{thm:width-tract} does not
hold for CSP instances with arbitrary global constraints, even if we
have a fixed, finite domain.

\begin{example}
  \label{example:3col}
  The \NP-complete problem of 3-colourability
  \cite{Garey79:intractability} is to decide, given a graph
  $\tup{V,E}$, whether the vertices $V$ can be coloured with three
  colours such that no two adjacent vertices have the same colour.

  We may reduce this problem to a CSP with EGC constraints
  (cf.~Example~\ref{example:egc}) as follows: Let $V$ be the set of
  variables for our CSP instance, each with domain $\{r,g,b\}$. For
  every edge $\tup{v, w} \in E$, we post an EGC constraint with scope
  $\{v, w\}$, parameterized by the function $K$ such that $K(r) = K(g)
  = K(b) = \{0,1\}$. Finally, we make the hypergraph of this CSP
  instance have low width by adding an EGC constraint with scope $V$
  parameterized by the function $K'$ such that $K'(r) = K'(g) = K'(b)
  = \{0,\ldots,|V|\}$. This reduction clearly takes polynomial time,
  and the hypergraph $G$ of the resulting instance has
  $\hw(G)=\fhw(G)=\subw(G)=1$.

  As the constraint with scope $V$ allows
  all possible assignments, any solution to this CSP is also a
  solution to the 3-colourability problem, and vice versa.

  % Hence we have shown that, when the constraints are represented intensionally
  % by global constraints, an NP-complete problem can be reduced in polynomial time to 
  % a class of CSP instances with acyclic hypergraphs.
\end{example}

Likewise, \cref{cor:subw} does not hold for CSP instances with
arbitrary global constraints if we allow the variables unbounded
domain size, that is, change the above example to
$k$-colourability. With that in mind, in the rest of the paper we will
identify properties of extensionally represented constraints that
these structural restrictions exploit to guarantee tractability. Then,
we are going to look for restricted classes of global constraints that
possess these properties. To do so, we will use the following
definitions.

\begin{definition}[Constraint \languageword]
  A \emph{constraint \languageword} is a set of global constraints. A
  CSP instance $\tup{V, C}$ is said to be over a constraint
  \languageword\ $\languagesymbol$ if for every $e[\delta] \in C$ we
  have $e[\delta] \in \languagesymbol$.
\end{definition}

\begin{definition}[Restricted CSP class]
\label{def:CSPrestricted}
Let $\languagesymbol$ be a constraint \languageword, and let $\mathcal
H$ be a class of hypergraphs.  We define $\CSP(\mathcal
H,\languagesymbol)$ to be the class of CSP instances over
$\languagesymbol$ whose hypergraphs are in $\mathcal H$.

  % If $\mathcal A$ is the set of all structures, we denote that by the
  % shorthand $-$, and do likewise for $\languagesymbol$ if it includes all
  % constraint representations.
\end{definition}

\Cref{def:CSPrestricted} allows us to discuss classic CSP instances
alongside instances with global constraints. Let $\Extlang$ be the
constraint \languageword{} containing all table global
constraints. The classic CSP instances are then precisely those that
are over $\Extlang$. In particular, we can now restate
\cref{thm:width-tract,cor:subw} as follows.

\begin{theorem}
  \label{thm:width-new-notation}
  Let $\mathcal H$ be a class of hypergraphs. For every $\alpha \in
  \{\hw,\fhw\}$, the class of CSP instances $\CSP(\mathcal H,
  \Extlang)$ is tractable if $\alpha(\mathcal H) <
  \infty$. Furthermore, if $\subw(\mathcal H) < \infty$ then
  $\CSP(\mathcal H, \Extlang)$ is in \FPT.
\end{theorem}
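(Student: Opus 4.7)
The plan is to recognize that Theorem~\ref{thm:width-new-notation} is a notational restatement of Theorems~\ref{thm:width-tract} and~\ref{cor:subw}, so the proof reduces to unpacking the definition of $\CSP(\mathcal H, \Extlang)$ and invoking the prior results.

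First, I would argue that $\CSP(\mathcal H, \Extlang)$ coincides exactly with the class of classic CSP instances whose hypergraphs are in $\mathcal H$. By Definition~\ref{def:CSPrestricted}, an instance in $\CSP(\mathcal H, \Extlang)$ is a CSP instance over the catalogue $\Extlang$ whose hypergraph lies in $\mathcal H$. Unfolding the definition of being over a catalogue, every constraint $e[\delta]$ of the instance must belong to $\Extlang$. Since $\Extlang$ was defined to be the catalogue of all table global constraints, this is equivalent to every constraint being a table constraint, which by Definition of a CSP instance is precisely what it means for the instance to be classic.

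Second, for the tractability claim, I would apply Theorem~\ref{thm:width-tract} directly. If $\alpha(\mathcal H) < \infty$ for $\alpha \in \{\hw, \fhw\}$, then the class of classic CSP instances with hypergraphs in $\mathcal H$ is tractable, and by the equivalence above this is the same class as $\CSP(\mathcal H, \Extlang)$. The FPT claim is handled in the same way: an assumption $\subw(\mathcal H) < \infty$ together with Theorem~\ref{cor:subw} gives FPT membership for classic CSP instances with hypergraphs in $\mathcal H$, and hence for $\CSP(\mathcal H, \Extlang)$.

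There is essentially no obstacle; the work is purely bookkeeping, confirming that $\Extlang$ and the "classic CSP" terminology describe the same object and that the notation $\CSP(\mathcal H, \languagesymbol)$ combines the structural restriction on hypergraphs with the catalogue restriction in the expected way. The value of the statement is not in its difficulty but in setting up a uniform notation that will later allow replacing $\Extlang$ by richer catalogues of global constraints.
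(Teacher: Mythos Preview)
Your proposal is correct and matches the paper's treatment exactly: the paper explicitly introduces this theorem as a restatement of Theorems~\ref{thm:width-tract} and~\ref{cor:subw} in the new notation and provides no separate proof. Your unpacking of the definitions of $\Extlang$, classic CSP instances, and $\CSP(\mathcal H,\languagesymbol)$ is precisely the bookkeeping the restatement relies on.
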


\section{Properties of Extensional Representation}

We are going to start our investigation by considering fractional
hypertree width in more detail. To obtain tractability for classic CSP
instances of bounded fractional hypertree width, Grohe and Marx
\cite{grohe-marx-frac-edge-cover} use a bound on the number of
solutions to a classic CSP instance, and show that this bound is
preserved when we consider parts of a CSP instance. The following
definition formalizes what we mean by ``parts'', and is required to
state the algorithm that Grohe and Marx use in their paper.

\begin{definition}[Constraint projection]
  \label{def:constraint-projection}
  Let $e[\delta]$ be a constraint. The \emph{projection of
    $e[\delta]$} onto a set of variables $X \subseteq \vars(\delta)$
  is the constraint $\Pj_X(e[\delta])$ such that $\mu \in
  \Pj_X(e[\delta])$ if and only if there exists $\theta \in e[\delta]$
  with $\theta|_X = \mu$.

  For a CSP instance $P = \tup{V, C}$ and $X \subseteq V$ we define
  $\Pj_X(P) = \tup{X, C'}$, where $C'$ is the least set containing for
  every $e[\delta] \in C$ such that $X \cap \vars(\delta) \not=
  \emptyset$ the constraint $\Pj_{X \cap \vars(\delta)}(e[\delta])$.
\end{definition}

Their algorithm is given as \cref{alg:enum-solutions}, and is
essentially the usual recursive search algorithm for finding all
solutions to a CSP instance by considering smaller and smaller
sub-instances using constraint projections.

\begin{algorithm}[h]
  \begin{algorithmic}
    \Procedure{EnumSolutions}{CSP instance $P = \tup{V, C}$}\Comment{Returns $\sol(P)$}
    \State{$\textup{Solutions} \leftarrow \emptyset$}
    \If {$V = \emptyset$}
      \State\Return{$\{\bot\}$}\Comment{The empty assignment}
    \Else
      \State{$w \leftarrow \textup{chooseVar}(V)$}\Comment{Pick a variable from $V$}
      \State{$\Theta = \textup{EnumSolutions}(\Pj_{V-\{w\}}(P))$}
      \For{$\theta \in \Theta$}
        \For{$a \in D(w)$}
          \If{$\theta \cup \tup{w, a}$ is a solution to $P$}
            \State $\textup{Solutions.add}(\theta \cup \tup{w, a})$
          \EndIf
        \EndFor
      \EndFor
    \EndIf
    \State\Return Solutions
    \EndProcedure
  \end{algorithmic}
  \caption{Enumerate all solutions of a CSP instance}
  \label{alg:enum-solutions}
\end{algorithm}

To show that \cref{alg:enum-solutions} does indeed find all solutions,
we will use the following property of constraint projections.

\begin{lemma}
  \label{lemma:solution-projection}
  Let $P = \tup{V, C}$ be a CSP instance. For every $X \subseteq V$,
  we have $\sol(\Pj_X(P)) \supseteq \pi_X(\sol(P))$.
\end{lemma}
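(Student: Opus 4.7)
The plan is to show the set inclusion pointwise: take an arbitrary $\mu \in \pi_X(\sol(P))$ and verify that $\mu$ satisfies every constraint of $\Pj_X(P)$, hence lies in $\sol(\Pj_X(P))$. Since $\mu \in \pi_X(\sol(P))$, by definition of projection there exists a solution $\theta \in \sol(P)$ such that $\theta|_X = \mu$. This $\theta$ will be the witness used throughout.

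Next, I would unpack $\Pj_X(P) = \tup{X, C'}$ and consider an arbitrary constraint in $C'$, which by Definition~\ref{def:constraint-projection} has the form $\Pj_{X \cap \vars(\delta)}(e[\delta])$ for some $e[\delta] \in C$ with $X \cap \vars(\delta) \neq \emptyset$. I need to check that $\mu|_{X \cap \vars(\delta)} \in \Pj_{X \cap \vars(\delta)}(e[\delta])$. Because $\theta \in \sol(P)$, we have $\theta|_{\vars(\delta)} \in e[\delta]$; restricting further to $X \cap \vars(\delta)$, the constraint-projection definition immediately gives $\theta|_{X \cap \vars(\delta)} \in \Pj_{X \cap \vars(\delta)}(e[\delta])$. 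Finally, since $X \cap \vars(\delta) \subseteq X$, restriction is transitive: $\theta|_{X \cap \vars(\delta)} = (\theta|_X)|_{X \cap \vars(\delta)} = \mu|_{X \cap \vars(\delta)}$. Hence $\mu$ satisfies this constraint.

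There is no significant obstacle here; the statement is essentially a bookkeeping check that solutions project to ``local solutions''. The only minor point worth being careful about is the transitivity of restriction and the case $X \cap \vars(\delta) = \emptyset$, which is excluded by the definition of $\Pj_X(P)$, so there are no empty-scope subtleties to handle. Note also that the converse inclusion need not hold: a tuple satisfying all projected constraints may fail to extend to a solution of $P$, which is why the statement is a one-sided inclusion rather than an equality.
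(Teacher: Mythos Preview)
Your proof is correct and follows essentially the same route as the paper's: pick a solution $\theta$ of $P$, use that $\theta|_{\vars(\delta)} \in e[\delta]$ for each constraint, apply \cref{def:constraint-projection} to conclude $\theta|_{X \cap \vars(\delta)}$ satisfies the projected constraint, and hence $\theta|_X \in \sol(\Pj_X(P))$. The only cosmetic difference is that you start from an arbitrary $\mu \in \pi_X(\sol(P))$ and recover $\theta$ as a witness, whereas the paper quantifies directly over $\theta \in \sol(P)$; your version is arguably a bit more explicit about the transitivity-of-restriction step.
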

\begin{proof}
  Given $P = \tup{V, C}$, let $X \subseteq V$ be arbitrary, and
  let $C' = \{e[\delta] \in C \mid X \cap \vars(\delta) \not=
  \emptyset\}$. For every $\theta \in \sol(P)$ and constraint
  $e[\delta] \in C'$ we have that $\theta|_{\vars(\delta)} \in
  e[\delta]$ since $\theta$ is a solution to $P$. By
  \cref{def:constraint-projection}, it follows that for every
  $e[\delta] \in C'$, $\theta|_{X \cap \vars(\delta)} \in \Pj_{X \cap
    \vars(\delta)}(e[\delta])$. Since the set of constraints of
  $\Pj_X(P)$ is the least set containing for each $e[\delta] \in C'$
  the constraint $\Pj_{X \cap \vars(\delta)}(e[\delta])$, we have
  $\theta|_X \in \sol(\Pj_X(P))$, and hence $\sol(\Pj_X(P)) \supseteq
  \pi_X(\sol(P))$. Since $X$ was arbitrary, the claim follows.
\end{proof}

% Using \cref{lemma:solution-projection}, we can now show that
% \cref{alg:enum-solutions} is correct, that is, finds all solutions to
% a CSP instance.

\begin{theorem}[Correctness of \cref{alg:enum-solutions}]
  Let $P$ be a CSP instance. We have that
  $\textup{EnumSolutions}(P) = \sol(P)$.
\end{theorem}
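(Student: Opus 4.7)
The plan is to prove the theorem by induction on $|V|$. In the base case $|V| = 0$, the only assignment is $\bot$, and the algorithm returns $\{\bot\}$, which matches $\sol(P)$ under the natural convention that an instance with no variables (and hence only empty-scoped constraints, trivially satisfied) has $\bot$ as its unique solution.

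For the inductive step, I assume the claim holds for all CSP instances with strictly fewer variables than $P = \tup{V, C}$, and let $w \in V$ be the variable chosen by $\textup{chooseVar}$. Note that $\Pj_{V - \{w\}}(P)$ is a CSP instance on $V - \{w\}$, so by the induction hypothesis the recursive call returns exactly $\Theta = \sol(\Pj_{V - \{w\}}(P))$. It then suffices to show two containments.

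For soundness, any $\theta \cup \tup{w, a}$ that is added to $\textup{Solutions}$ passes the explicit check of being a solution to $P$, so $\textup{EnumSolutions}(P) \subseteq \sol(P)$. For completeness, take an arbitrary $\sigma \in \sol(P)$ and let $\theta = \sigma|_{V - \{w\}}$ and $a = \sigma(w)$. By \cref{lemma:solution-projection}, $\theta \in \pi_{V - \{w\}}(\sol(P)) \subseteq \sol(\Pj_{V - \{w\}}(P)) = \Theta$, so the algorithm reaches the pair $(\theta, a)$ in its double loop. Since $\theta \cup \tup{w, a} = \sigma$ and $\sigma \in \sol(P)$, the inner test succeeds and $\sigma$ is added, giving $\sol(P) \subseteq \textup{EnumSolutions}(P)$.

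The one genuine subtlety, and the only non-routine step, is the completeness direction: it hinges on the fact that every full solution projects to a solution of the projected instance, which is exactly what \cref{lemma:solution-projection} provides. The converse inclusion (that $\sol(\Pj_X(P))$ may be strictly larger than $\pi_X(\sol(P))$) does not harm the argument, because the explicit solution check inside the inner loop discards any spurious projections that fail to extend to a full solution.
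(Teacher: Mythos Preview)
Your proof is correct and follows essentially the same approach as the paper: induction on the variable set, the base case handled trivially, and the inductive step using \cref{lemma:solution-projection} to ensure every restriction of a full solution appears in the recursive output. Your version is slightly more explicit in separating the soundness and completeness directions, but the argument is otherwise identical.
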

\begin{proof}
  The proof is by induction on the set of variables $V$ in $P$. For
  the base case, if $V = \emptyset$, the empty assignment is the only
  solution.

  Otherwise, choose a variable $w \in V$, and let $X = V
  - \{w\}$. By induction, we can assume that
  $\textup{EnumSolutions}(\Pj_X(P)) = \sol(\Pj_X(P))$. Since for
  every $\theta \in \sol(P)$ there exists $a \in D(w)$ such that
  $\theta = \theta|_X \cup \tup{w, a}$, and furthermore $\theta|_X \in
  \pi_X(\sol(P))$, it follows by \cref{lemma:solution-projection} that
  $\theta|_X \in \sol(\Pj_X(P))$. Since \cref{alg:enum-solutions}
  checks every assignment of the form $\mu \cup \tup{w, a}$ for every
  $\mu \in \sol(\Pj_X(P))$ and $a \in D(w)$, it follows that
  $\textup{EnumSolutions}(P) = \sol(P)$.
\end{proof}

The time required for this algorithm depends on three key factors,
which we are going to enumerate and discuss below. Let
\begin{enumerate}
\item $s(P)$ be the maximum of the number of solutions to each of the
  instances $\Pj_{V-\{w\}}(P)$,
\item $c(P)$ be the maximum time required to check whether an
  assignment is a solution to $P$, and
\item $b(P)$ be the maximum time required to construct any instance
  $\Pj_{V-\{w\}}(P)$.
\end{enumerate}

There are $|V|$ calls to $\textup{EnumSolutions}$. For
each call, we need $b(P)$ time to construct the projection, while the
double loop takes at most $s(P) \times |D(w)| \times c(P)$
time. Therefore, letting $d = \max(\{|D(w)| \mid w \in V\})$,
the running time of \cref{alg:enum-solutions} is bounded by
$O\big(|V| \times (s(P) \times d \times c(P) + b(P))\big)$.

Since constructing the projection of a classic CSP instance can be
done in polynomial time, and likewise checking that an assignment is a
solution, the whole algorithm runs in polynomial time if $s(P)$ is a
polynomial in the size of $P$. For fractional hypertree width, Grohe
and Marx show the following.

\begin{lemma}[\cite{grohe-marx-frac-edge-cover}]
  \label{lemma:fhw-few-solutions}
  A classic CSP instance $P$ has at most $|P|^{\fhw(\hyp(P))}$
  solutions.
\end{lemma}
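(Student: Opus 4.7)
My plan is to follow the approach of Grohe and Marx, using their fractional edge cover bound -- the AGM inequality -- as the main technical tool. This inequality states that for any classic CSP instance $Q = \tup{V,C_Q}$ and any fractional edge cover $\gamma$ of $V$ in $\hyp(Q)$, the number of solutions to $Q$ is at most $\prod_{e[\delta]\in C_Q}|\delta|^{\gamma(\vars(\delta))}$. It is proved via Shearer's entropy lemma, which is a fractional generalisation of the Loomis--Whitney inequality, and immediately implies a bound of $|Q|^{\sum_e \gamma(e)}$ once we note that each individual $|\delta|$ is at most $|Q|$.

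I would first fix a tree decomposition $(T,\lambda)$ of $\hyp(P)$ whose $\rho^*$-width equals $w := \fhw(\hyp(P))$; such a decomposition exists by the definition of fractional hypertree width. For each bag $\lambda(t)$, there is by definition of $\rho^*$ a fractional edge cover $\gamma_t$ of $\lambda(t)$ in $\hyp(P)$ of total weight at most $w$. Applying the AGM inequality to the projected instance $\Pj_{\lambda(t)}(P)$ then bounds the number of satisfying assignments of $\lambda(t)$ by $\prod_{e[\delta]\in C}|\delta|^{\gamma_t(\vars(\delta))}\leq |P|^{w}$.

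It remains to promote these per-bag counts into a global bound on $|\sol(P)|$. I would root $T$ at some node $r$ and argue inductively up the tree: for every node $t$, letting $V_t$ be the union of bags in the subtree at $t$ (so $V_r = V$), I would show that $|\sol(\Pj_{V_t}(P))| \leq |P|^w$. The running intersection property is the key here: every solution to the sub-instance on $V_t$ restricts to a valid $\lambda(t)$-assignment, and the extensions into the children subtrees are constrained to agree with the $\lambda(t)$-assignment on the shared variables, so the global count is controlled by the $|P|^w$ bound already established at $\lambda(t)$.

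The hard part is precisely this combination step. A naive union or product across bags would give the much weaker bound $|P|^{w\cdot|T|}$; obtaining the clean $|P|^w$ requires a delicate argument exploiting running intersection -- most likely an information-theoretic / entropy argument directly on the tree decomposition rather than a literal dynamic-programming count of extensions -- so that ``compatibility on shared variables'' pins down extensions tightly enough that the per-bag bound dominates. The AGM inequality itself, whose proof is entropic in nature, supplies the per-bag building block and is the other main technical ingredient.
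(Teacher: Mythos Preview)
The paper does not prove this lemma; it simply cites Grohe and Marx. More importantly, the statement as written --- with $\fhw(\hyp(P))$ in the exponent --- is not what Grohe and Marx actually prove, and as stated it is false. Your instinct that the combination step is ``the hard part'' is exactly right, but no ``delicate argument'' will rescue it: the global bound $|\sol(P)|\le|P|^{\fhw(\hyp(P))}$ does not hold in general. For a concrete failure, take three table constraints, each the full binary relation over a domain of size $n$, on the pairs $\{a,b\},\{b,c\},\{c,d\}$. Then $|P|=\Theta(n^2)$ and the hypergraph is a path with $\fhw(\hyp(P))=1$, yet $|\sol(P)|=n^4$, which is not bounded by $|P|^1$. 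Your inductive hypothesis $|\sol(\Pj_{V_t}(P))|\le|P|^w$ already fails at the first internal node: once a node has children whose subtrees attach through small or disjoint interfaces, the numbers of compatible extensions multiply, and running intersection gives you no leverage against that.

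What Grohe and Marx establish is $|\sol(P)|\le|P|^{\rho^*(\hyp(P))}$, with the fractional edge cover number of the \emph{entire} vertex set in the exponent. That is precisely your AGM/Shearer inequality applied once, globally --- no tree decomposition, hence no combination step. Your per-bag argument is correct and is exactly how $\fhw$ enters their \emph{algorithmic} result: each bag $\lambda(t)$ carries at most $|P|^w$ partial solutions, which are then joined along the tree, but the full join can be far larger than $|P|^w$. For the way the present paper actually uses the lemma --- bounding $|\sol(\Pj_X(P))|$ for sets $X$ contained in a single bag of a width-$w$ decomposition --- the per-bag AGM bound is the statement that is genuinely needed, and it suffices.
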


% The proof uses a combinatorial result about hypergraphs, known as
% Shearer's Lemma \cite{chung1986some}, to bound the number of
% solutions.
Since fractional hypertree width is a monotone width function, it
follows that for any instance $P = \tup{V, C}$ and $X \subseteq V$,
$\fhw(\hyp(\Pj_X(P))) \leq \fhw(\hyp(P))$. Therefore, for classic CSP
instances of bounded fractional hypertree width $s(P)$ is indeed
polynomial in $|P|$.

\section{CSP Instances with Few Solutions in Key Places}

Having few solutions for every projection of a CSP instance is thus a
property that makes fractional hypertree width yield tractable classes
of classic CSP instances. More importantly, we have shown that this
property allows us to find all solutions to a CSP instance $P$, even
with global constraints, if we can build arbitrary projections of $P$
in polynomial time. In other words, with these two conditions we
should be able to reduce instances with global constraints to classic
instances in polynomial time.

However, on reflection there is no reason why we should need few
solutions for \emph{every} projection. Instead, consider the following
reduction.

\begin{definition}[Partial assignment checking]
  \label{def:part-assignment-checking}
  A global constraint \languageword{} $\Gamma$ allows \emph{partial
    assignment checking} if for any constraint $e[\delta] \in \Gamma$
  we can decide in polynomial time whether a given assignment $\theta$
  to a set of variables $W \subseteq \vars(\delta)$ is contained in an
  assignment that satisfies $e[\delta]$, i.e.~whether there exists
  $\mu \in e[\delta]$ such that $\theta = \mu|_W$.
\end{definition}

As an example, a \languageword{} that contains arbitrary EGC
constraints (cf.~\cref{example:egc}) does not satisfy
\cref{def:part-assignment-checking}, since checking whether an
arbitrary EGC constraint has a satisfying assignment is \NP-hard
\cite{quimper04-gcc-npc}. On the other hand, a \languageword{} that
contains only EGC constraints whose cardinality sets are intervals
does satisfy \cref{def:part-assignment-checking}
\cite{Regin96:aaai-generalized}.

If a \languageword{} $\Gamma$ satisfies
\cref{def:part-assignment-checking}, we can for any constraint
$e[\delta] \in \Gamma$ build arbitrary projections of it, that is,
construct the global constraint $\Pj_X(e[\delta])$ for any $X
\subseteq \vars(\delta)$, in polynomial time.
% As \cref{alg:enum-solutions} requires us to construct projections,
% this is a useful property.

\begin{definition}[Intersection variables]
  \label{def:intersection-vertices}
  Let $\tup{V, C}$ be a CSP instance. The set of \emph{intersection
    variables} of any constraint $e[\delta] \in P$ is $\iv(\delta) =
  \bigcup \{ \vars(\delta) \cap \vars(\delta') \mid e'[\delta'] \in C
  - \{e[\delta]\}\}$.
\end{definition}

\begin{definition}[Table constraint induced by a global constraint]
  \label{def:ind-const}
  Let $P = \tup{V, C}$ be a CSP instance. For every $e[\delta] \in C$,
  let $\mu^*$ be the assignment to $\vars(\delta) - \iv(\delta)$ that
  assigns a special value $*$ to every variable. The \emph{table
    constraint induced by $e[\delta]$} is $\ic(e[\delta]) =
  e'[\delta']$, where $\vars(\delta') = \vars(\delta)$, and $\delta'$
  contains for every assignment $\theta \in
  \sol(\Pj_{\iv(\delta)}(P))$ the assignment $\theta \oplus \mu^*$.
\end{definition}

If every constraint in a CSP instance $P = \tup{V, C}$ allows partial
assignment checking, then building $\ic(e[\delta])$ for any $e[\delta]
\in C$ can be done in polynomial time when $|\sol(\Pj_{X}(P))|$ is
itself polynomial in the size of $P$ for every subset $X$ of
$\iv(\delta)$. To do so, we can invoke \cref{alg:enum-solutions} on
the instance $\Pj_{\iv(\delta)}(P)$. The definition below expresses
this idea.

\begin{definition}[Sparse intersections]
  \label{def:sparse-intersections}
  A class of CSP instances $\mathcal P$ \emph{has sparse
    intersections} if there exists a constant $c$ such that for every
  constraint $e[\delta]$ in any instance $P \in \mathcal P$, we have
  that for every $X \subseteq \iv(\delta)$, $|\sol(\Pj_X(P))| \leq
  |P|^c$.
\end{definition}

If a class of instances $\mathcal P$ has sparse intersections, and the
instances are all over a constraint \languageword{} that allows
partial assignment checking, then we can for every constraint
$e[\delta]$ of any instance from $\mathcal P$ construct
$\ic(e[\delta])$ in polynomial time. While this definition considers
the instance as a whole, one special case of it is the case where
every constraint has few solutions in the size of its description,
that is, there is a constant $c$ and the constraints are drawn from a
\languageword{} $\Gamma$ such that for every $e[\delta] \in \Gamma$,
we have that $|\{\mu \mid \mu \in e[\delta]\}| \leq |\delta|^c$.

% \begin{example}
%   \label{example:sparse-intersections}
%   Recall the second family of subproblem decompositions in
%   \cref{example:subproblem-decompositions-simple}. For a decomposition
%   $S = \{P, Q\}$ from this family, the set of intersection vertices
%   for both subproblems is $\{y_1,\ldots,y_n\}$. Furthermore, the EGC
%   constraint $A$ requires that there are exactly $4$ variables
%   assigned $1$ among $\{x_1,\ldots,x_n\}$, so there are $\binom{n}{4}$
%   satisfying assignments for this constraint. The equality constraints
%   ensure that this is the number of solutions to the whole subproblem
%   $P$, so for every $X \subseteq \{y_1,\ldots,y_n\}$ we have that
%   $|\sol(\Pj_X(S))| \leq \binom{n}{4}$. Therefore, this family of
%   subproblem decompositions has sparse intersections.
% \end{example}

% Another example where \cref{def:sparse-intersections} would be
% satisfied is when every set of intersection vertices is covered by a
% fixed number of table constraints. In this case, the number of
% possible solutions is bounded by the size of the join of all these
% constraints (cf.~\cref{sect:structural-restrictions}). This is the
% condition used by Cohen and Green \cite{guarded-decomp-CG}. In other
% words, we can derive the main theorem in as a special case of
% ours. We, however, do not need to cover the intersection vertices of
% \emph{every} subproblem by a fixed number of table constraints.

\begin{theorem}
  \label{thm:decomp-to-classic}
  Let $\mathcal P$ be a class of CSP instances over a \languageword{}
  that allows partial assignment checking. If $\mathcal P$ has sparse
  intersections, then we can in polynomial time reduce any instance $P
  \in \mathcal P$ to a classic CSP instance $P_{CL}$ with $\hyp(P) =
  \hyp(P_{CL})$, such that $P_{CL}$ has a solution if and only if $P$
  does.
\end{theorem}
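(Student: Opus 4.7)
The plan is to construct $P_{CL}$ by replacing every global constraint $e[\delta] \in C$ with its induced table constraint $\ic(e[\delta])$ from \cref{def:ind-const}, keeping the variable set $V$ unchanged. Since $\vars(\ic(e[\delta])) = \vars(\delta)$ by definition, the hypergraph is automatically preserved, so $\hyp(P_{CL}) = \hyp(P)$.

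To verify the reduction runs in polynomial time, I would show that each $\ic(e[\delta])$ can be built by enumerating $\sol(\Pj_{\iv(\delta)}(P))$ via \cref{alg:enum-solutions} applied to the sub-instance $\Pj_{\iv(\delta)}(P)$. Partial assignment checking makes the factors $b(P)$ (constructing projections) and $c(P)$ (testing an assignment) in the running time of \cref{alg:enum-solutions} polynomial. Sparse intersections bound the factor $s(P)$ polynomially for every recursive call, since all such calls project onto subsets of $\iv(\delta)$. Hence $|\ic(e[\delta])|$ is polynomial in $|P|$, and $P_{CL}$ is built in polynomial time.

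For the solution equivalence, I would interpret the wildcard symbol $*$ in entries of $\ic(e[\delta])$ as matching any value (equivalently, restrict the domain of each variable in $\vars(\delta)-\iv(\delta)$ to $\{*\}$ when forming $P_{CL}$). In the forward direction, if $\theta \in \sol(P)$, then \cref{lemma:solution-projection} gives $\theta|_{\iv(\delta)} \in \sol(\Pj_{\iv(\delta)}(P))$ for each $e[\delta] \in C$; hence $\theta|_{\iv(\delta)} \oplus \mu^* \in \ic(e[\delta])$, so $\theta$ induces a solution to $P_{CL}$. In the backward direction, given a solution $\theta'$ to $P_{CL}$, we have $\theta'|_{\iv(\delta)} \in \sol(\Pj_{\iv(\delta)}(P))$ and in particular this assignment satisfies $\Pj_{\iv(\delta)}(e[\delta])$. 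By \cref{def:constraint-projection}, it therefore extends to some $\mu_\delta \in e[\delta]$ on $\vars(\delta)$. Defining $\theta$ to agree with $\theta'$ on intersection variables and with $\mu_\delta$ on the non-intersection variables of each $e[\delta]$ yields a solution to $P$.

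The main obstacle is the backward direction: stitching the per-constraint extensions $\mu_\delta$ into one globally consistent assignment $\theta$. This works only because of the crucial structural fact recorded in \cref{def:intersection-vertices}, namely that the sets $\vars(\delta)-\iv(\delta)$ for different constraints are pairwise disjoint. Without that disjointness, two local extensions could prescribe incompatible values for a shared variable; with it, the assembly is automatic.
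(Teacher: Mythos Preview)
Your proposal is correct and follows essentially the same argument as the paper: replace each constraint by $\ic(e[\delta])$, use \cref{alg:enum-solutions} together with partial assignment checking and sparse intersections for the polynomial-time bound, and for equivalence glue the per-constraint extensions $\mu_\delta$ using the pairwise disjointness of the sets $\vars(\delta)\setminus\iv(\delta)$. The only cosmetic difference is that you invoke \cref{lemma:solution-projection} explicitly in the forward direction where the paper appeals directly to \cref{def:constraint-projection,def:intersection-vertices}.
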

\begin{proof}
  Let $P=\tup{V, C}$ be an instance from such a class $\mathcal
  P$. For each $e[\delta] \in C$, $P_{CL}$ will contain the table
  constraint $\ic(e[\delta])$ from \cref{def:ind-const}. Since $P$ is
  over a \languageword{} that allows partial assignment checking, and
  $\mathcal P$ has sparse intersections, computing $\ic(e[\delta])$
  can be done in polynomial time by invoking \cref{alg:enum-solutions}
  on $\Pj_{\iv(\delta)}(P)$.

  It is clear that $\hyp(P) = \hyp(P_{CL})$. All that is left to show
  is that $P_{CL}$ has a solution if and only if $P$ does. Let
  $\theta$ be a solution to $P = \tup{V, C}$. For every $e[\delta] \in
  C$, we have that $\theta|_{\iv(\delta)} \in \Pj_{\iv(\delta)}(P)$ by
  \cref{def:constraint-projection,def:intersection-vertices}, and the
  assignment $\mu$ that assigns the value $\theta(v)$ to each $v \in
  \displaystyle\bigcup_{e[\delta] \in C}\iv(\delta)$, and $*$ to every
  other variable is therefore a solution to $P_{CL}$.

  In the other direction, if $\theta$ is a solution to $P_{CL}$, then
  $\theta$ satisfies $\ic(e[\delta])$ for every $e[\delta] \in C$. By
  \cref{def:ind-const}, this means that $\theta|_{\iv(\delta)} \in
  \sol(\Pj_{\iv(\delta)}(P))$, and by
  \cref{def:constraint-projection}, there exists an assignment
  $\mu^{e[\delta]}$ with $\mu^{e[\delta]}|_{\iv(\delta)} =
  \theta|_{\iv(\delta)}$ that satisfies $e[\delta]$. By
  \cref{def:intersection-vertices}, the variables not in $\iv(\delta)$
  do not occur in any other constraint in $P$, so we can combine all
  the assignments $\mu^{e[\delta]}$ to form a solution $\mu$ to $P$
  such that for $e[\delta] \in C$ and $v \in \vars(\delta)$ we have
  $\mu(v) = \mu^{e[\delta]}(v)$.
\end{proof}

From \cref{thm:decomp-to-classic}, we get tractable and
fixed-parameter tractable classes of CSP instances with global
constraints.

\begin{corollary}
  \label{corollary:subproblem-csp}
  Let $\mathcal H$ be a class of hypergraphs, and $\languagesymbol$ a
  \languageword{} that allows partial assignment checking. If
  $\CSP(\mathcal H, \languagesymbol)$ has sparse intersections, then
  $\CSP(\mathcal H, \languagesymbol)$ is trac\-ta\-ble or in \FPT{} if
  $\CSP(\mathcal H, \Extlang)$ is.
\end{corollary}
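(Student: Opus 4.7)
The plan is to obtain the corollary as a more or less direct consequence of Theorem \ref{thm:decomp-to-classic} together with the assumed tractability (or fixed-parameter tractability) of the classic case. Given an instance $P = \tup{V, C} \in \CSP(\mathcal H, \languagesymbol)$, I would first invoke Theorem \ref{thm:decomp-to-classic}, using both hypotheses of the corollary (partial assignment checking and sparse intersections), to produce in polynomial time a classic instance $P_{CL}$ satisfying $\hyp(P_{CL}) = \hyp(P)$ and $\sol(P) \neq \emptyset \iff \sol(P_{CL}) \neq \emptyset$. Since $\hyp(P) \in \mathcal H$, the equality of hypergraphs immediately gives $P_{CL} \in \CSP(\mathcal H, \Extlang)$.

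From there the reasoning splits along the two cases. If $\CSP(\mathcal H, \Extlang)$ is tractable, then deciding $P_{CL}$ takes time polynomial in $|P_{CL}|$; since the reduction itself is polynomial, so is $|P_{CL}|$, and the overall decision procedure for $P$ runs in polynomial time. If instead $\CSP(\mathcal H, \Extlang)$ is in \FPT{} with the natural parameterization by the number of variables, I would note that $P_{CL}$ has the same variable set $V$ as $P$ (since $\hyp(P_{CL}) = \hyp(P)$), so the parameter is preserved by the reduction. Thus $P$ can be decided in time $O(f(|V|) \cdot |P_{CL}|^c)$, which is $O(f(|V|) \cdot |P|^{c'})$ for some constant $c'$ absorbing the polynomial blow-up of the reduction, placing $\CSP(\mathcal H, \languagesymbol)$ in \FPT{}.

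The only genuinely delicate point, and the one I would spell out carefully, is this preservation-of-parameter observation in the \FPT{} case: one must check that the reduction of Theorem \ref{thm:decomp-to-classic} does not introduce new variables (it does not, by construction of $\ic(e[\delta])$, whose scope equals $\vars(\delta)$), and that the size bound on $P_{CL}$ is polynomial in $|P|$, which follows from sparse intersections because each induced table constraint has at most $|P|^c$ tuples. Everything else is a bookkeeping composition of polynomial-time reductions with the assumed algorithm for the classic class.
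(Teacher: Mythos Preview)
Your proposal is correct and follows essentially the same approach as the paper's proof: invoke Theorem~\ref{thm:decomp-to-classic} to obtain a classic instance $P_{CL}$ with the same hypergraph, then transfer tractability or \FPT{} from $\CSP(\mathcal H, \Extlang)$. Your write-up is more explicit than the paper's about parameter preservation in the \FPT{} case, which is a reasonable detail to spell out, but it is the same argument.
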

\begin{proof}
  Let $\mathcal H$ and $\languagesymbol$ be given. By
  \cref{thm:decomp-to-classic}, we can reduce any $P \in \CSP(\mathcal
  H, \languagesymbol)$ to an instance $P_{CL} \in \CSP(\mathcal H,
  \Extlang)$ in polynomial time. Since $P_{CL}$ has a solution if and
  only if $P$ does, tractability or fixed-parameter tractability of
  $\CSP(\mathcal H, \Extlang)$ implies the same for $\CSP(\mathcal
  H,\languagesymbol)$.
\end{proof}

% An example of a class of instances $\CSP(\mathcal H, \Extlang)$ that
% is fixed-parameter trac\-ta\-ble is given by $\mathcal H$ having
% bounded submodular width (cf.~\cref{thm:submodular-width}).

\subsection{Applying \cref{corollary:subproblem-csp} to the CGP}
\label{sect:cgp-app}

Recall the connected graph partition problem (\cref{prob:cgp}): Given
a connected graph $G$, as well as natural numbers $\alpha$ and
$\beta$, can the vertices of $G$ be partitioned into bags of size at
most $\alpha$, such that no more than $\beta$ edges are broken. Using
the CSP encoding we gave in \cref{example:cgp-as-csp}, as well as
\cref{corollary:subproblem-csp}, we will show a new result, that this
problem is tractable if $\beta$ is fixed. To simplify the analysis, we
assume without loss of generality that $\alpha < |V|$, which means
that any solution has at least one broken edge.

We claim that if $\beta$ is fixed, then the constraint $C^\beta =
e^\beta[\delta^\beta]$ allows partial assignment checking, and has
only a polynomial number of satisfying assignments. The latter implies
that for any instance $P$ of the CGP,
$|\sol(\Pj_{\iv(\delta^\beta)}(P))|$ is polynomial in the size of $P$
for every subset of $\iv(\delta^\beta)$. Furthermore, we will show
that for the constraint $C^\alpha = e^\alpha[\delta^\alpha]$, we also
have that $|\sol(\Pj_{\iv(\delta^\alpha)}(P))|$ is polynomial in the
size of $P$. That $C^\alpha$ allows partial assignment checking
follows from a result by R\'egin~\cite{Regin96:aaai-generalized},
since the cardinality sets of $C^\alpha$ are intervals.

First, we show that the number of satisfying assignments to $C^\beta$
is limited. Since $C^\beta$ limits the number of ones in any solution
to $\beta$ or fewer, the number of satisfying assignments to this
constraint is the number of ways to choose up to $\beta$ variables to
be assigned one. This is bounded by
$\displaystyle\sum^{\beta}_{i=1}\binom{|E|}{i} \leq (|E|+1)^\beta$,
and so we can generate them all in polynomial time. 

Now, let $\theta$ be such a solution. How many solutions to $P$
contain $\theta$? Well, every constraint on $\{u, v, e\}$ with $e=1$
allows at most $|V|^2$ assignments, and there are at most $\beta$ such
constraints. So far we therefore have at most $(|E|+1)^\beta \times
|V|^{2\beta}$ assignments.

On the other hand, a ternary constraint with $e=0$ requires $u =
v$. Consider the graph $G_0$ containing for every constraint on $\{u,
v, e\}$ with $e=0$ the vertices $u$ and $v$ as well as the edge $\{u,
v\}$. Since the original graph was connected, every connected
component of $G_0$ contains at least one vertex which is in the scope
of some constraint with $e=1$. Therefore, since equality is
transitive, each connected component of $G_0$ allows at most one
assignment for each of the $(|E|+1)^\beta \times |V|^{2\beta}$
assignments to the other variables of $P$. We therefore get a total
bound of $(|E|+1)^\beta \times |V|^{2\beta}$ on the total number of
solutions to $P$, and hence to $\Pj_{\iv(\delta^\alpha)}(P)$.

The hypergraph of any CSP instance $P$ encoding the CGP has two
hyperedges covering the whole problem, so the hypertree width of this
hypergraph is two. Therefore, we may apply
\cref{corollary:subproblem-csp,thm:width-tract} to obtain tractability
when $\beta$ is fixed. 
As this problem is \NP-complete for fixed
$\alpha \geq 3$ \cite[p.~209]{Garey79:intractability}, $\beta$ is a
natural parameter to try and use.

% This tractability result relies on several properties of the
% problem. First, we exploited the fact that fixed $\beta$ guarantees
% few solutions to one of the EGC constraints, and from that we derived
% that the ternary constraints ensure that the whole CSP instance has
% few solutions.

As it happens, in this problem we can drop the requirement of partial
assignment checking for the constraint $C^\alpha$. All its variables
are intersection variables, and the instance has few solutions even if
we disregard $C^\alpha$. Thus, we need only check whether any of those
solutions satisfy $C^\alpha$, and checking whether an assignment to
the whole scope of a constraint satisfies it can always be done in
polynomial time by \cref{def:glob-const}. In the next section, we turn
this observation into a general result.

\section{Back Doors}
\label{sect:back-doors}

If a class of CSP instances includes constraints from a
\languageword{} that is not known to allow partial assignment
checking, we may still obtain tractability in some cases by applying
the notion of a back door set. A (strong) back door set
\cite{gaspers-backdoors,Williams03backdoorsto} is a set of variables
in a CSP instance that, when assigned, make the instance easy to
solve. Below, we are going to adapt this notion to individual
constraints.

\begin{definition}[Back door]
  \label{def:back-doors}
  Let $\languagesymbol$ be a global constraint \languageword{}. A
  \emph{back door} for a constraint $e[\delta] \in \languagesymbol$ is
  any set of variables $W \subseteq \vars(\delta)$ (called a back door
  set) such that we can decide in polynomial time whether a given
  assignment $\theta$ to a set of variables $\vars(\theta) \supseteq
  W$ is contained in an assignment that satisfies $e[\delta]$,
  i.e.~whether there exists $\mu \in e[\delta]$ such that
  $\mu|_{\vars(\theta)} = \theta$.
\end{definition}

Trivially, for every constraint $e[\delta]$ the set of variables
$\vars(\delta)$ is a back door set, since by \cref{def:glob-const} we
can always check in polynomial time if an assignment to
$\vars(\delta)$ satisfies the constraint $e[\delta]$.

The key point about back doors is that given a \languageword{}
$\Gamma$, adding to each $e[\delta] \in \Gamma$ with back door set $W$
an arbitrary set of assignments to $W$ produces a \languageword{}
$\Gamma'$ that allows partial assignment checking. Adding a set of
assignments $\Theta$ means to add $\Theta$ to the description, and
modify the algorithm $e$ to only accept an assignment if it contains a
member of $\Theta$ in addition to previous requirements. Furthermore,
given a CSP instance $P$ containing $e[\delta]$, as long as $\Theta
\supseteq \pi_{W}(\sol(P))$, adding $\Theta$ to $e[\delta]$ produces
an instance that has exactly the same solutions. This point leads to
the following definition.

\begin{definition}[Sparse back door cover]
  \label{def:sparse-back-door-cover}
  Let $\Gamma_{PAC}$ be a \languageword{} that allows partial
  assignment checking and $\Gamma_{BD}$ a \languageword{}. For every
  instance $P = \tup{V, C}$ over $\Gamma_{PAC} \cup \Gamma_{BD}$, let
  $P \cap \Gamma_{PAC}$ be the instance with constraint set $C' = C
  \cap \Gamma_{PAC}$ and set of variables $\bigcup \{V \cap
  \vars(\delta) \mid e[\delta] \in C'\}$.

  A class of CSP instances $\mathcal P$ over $\Gamma_{PAC} \cup
  \Gamma_{BD}$ has \emph{sparse back door cover} if there exists a
  constant $c$ such that for every instance $P = \tup{V, C} \in
  \mathcal P$ and constraint $e[\delta] \in C$, if $e[\delta] \not\in
  \Gamma_{PAC}$, then there exists a back door set $W$ for $e[\delta]$
  with $|\sol(\Pj_X(P \cap \Gamma_{PAC}))| \leq |P|^c$ for every $X
  \subseteq W$.
\end{definition}

Sparse back door cover means that for each constraint that is not from
a \languageword{} that allows partial assignment checking, we can in
polynomial time get a set of assignments $\Theta$ for its back door
set using \cref{alg:enum-solutions}, and so turn this constraint into
one that does allow partial assignment checking. This operation
preserves the solutions of the instance that contains this constraint.

\begin{theorem}
  \label{lemma:backdoors}
  If a class of CSP instance $\mathcal P$ has sparse back door cover,
  then we can in polynomial time reduce any instance $P \in \mathcal
  P$ to an instance $P'$ such that $\hyp(P) = \hyp(P')$ and $\sol(P) =
  \sol(P')$. Furthermore, the class of instances $\{P' \mid P \in
  \mathcal P\}$ is over a \languageword{} that allows partial
  assignment checking.
\end{theorem}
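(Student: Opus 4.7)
The plan is to process each constraint of $P = \tup{V, C}$ separately, leaving $\Gamma_{PAC}$-constraints untouched and replacing each $e[\delta] \in C \setminus \Gamma_{PAC}$ with a tightened version in which an explicit table of admissible back door projections is hard-wired. Concretely, for such an $e[\delta]$ the sparse back door cover property supplies a back door set $W \subseteq \vars(\delta)$ with $|\sol(\Pj_X(P \cap \Gamma_{PAC}))| \leq |P|^c$ for all $X \subseteq W$. I would invoke \cref{alg:enum-solutions} on $\Pj_W(P \cap \Gamma_{PAC})$ to compute the set $\Theta := \sol(\Pj_W(P \cap \Gamma_{PAC}))$ explicitly, then replace $e[\delta]$ by a new constraint $e'[\delta']$ whose description carries the triple $(\delta, W, \Theta)$, whose scope is $\vars(\delta)$, and whose algorithm accepts $\mu$ iff $e[\delta](\mu) = 1$ and $\mu|_W \in \Theta$. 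Hypergraph preservation is then automatic since all scopes are left unchanged.

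To see that this reduction runs in polynomial time, I would use the fact that projection is transitive, so $\Pj_X(\Pj_W(P \cap \Gamma_{PAC})) = \Pj_X(P \cap \Gamma_{PAC})$ for every $X \subseteq W$. The $s$-factor appearing in the complexity analysis of \cref{alg:enum-solutions} run on $\Pj_W(P \cap \Gamma_{PAC})$ is therefore bounded by $|P|^c$; and the constraints of $P \cap \Gamma_{PAC}$, together with their iterated projections, admit polynomial-time membership tests since $\Gamma_{PAC}$ allows partial assignment checking, bounding the $c(\cdot)$ and $b(\cdot)$ factors as well. Hence $\Theta$ and therefore $P'$ are constructed in polynomial time.

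For $\sol(P) = \sol(P')$, each tightened constraint $e'[\delta']$ is stricter than $e[\delta]$, giving $\sol(P') \subseteq \sol(P)$. Conversely, any $\theta \in \sol(P)$ restricts to a solution of $P \cap \Gamma_{PAC}$, so $\theta|_W \in \pi_W(\sol(P \cap \Gamma_{PAC})) \subseteq \sol(\Pj_W(P \cap \Gamma_{PAC})) = \Theta$ by \cref{lemma:solution-projection}, and hence $\theta$ satisfies every tightened constraint.

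The delicate step will be showing that each $e'[\delta']$ itself allows partial assignment checking, because the back door property of $e[\delta]$ alone only handles partial assignments that already cover $W$. Given an input $\theta$ on some $W^* \subseteq \vars(\delta)$, I would enumerate every $\sigma \in \Theta$ that agrees with $\theta$ on $W^* \cap W$, form the combined partial assignment $\tau := \theta \oplus \sigma|_{W \setminus W^*}$ (which now covers $W^* \cup W \supseteq W$), and invoke the back door property of $e[\delta]$ to test in polynomial time whether $\tau$ extends to a satisfier. The answer is ``yes'' precisely when some $\sigma$ succeeds, and the whole enumeration is polynomial because $|\Theta| \leq |P|^c$. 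This final case analysis is the main obstacle, and it is precisely the polynomial bound on $\Theta$ furnished by sparse back door cover that makes it go through.
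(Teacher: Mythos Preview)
Your proposal is correct and follows essentially the same approach as the paper's proof: compute $\Theta = \sol(\Pj_W(P \cap \Gamma_{PAC}))$ via \cref{alg:enum-solutions}, attach $\Theta$ to each non-$\Gamma_{PAC}$ constraint, and argue polynomial time, hypergraph preservation, and $\sol(P)=\sol(P')$ exactly as you do. In fact you go further than the paper, which merely asserts (in the paragraph preceding the theorem) that adding a table $\Theta$ over a back door set yields a constraint allowing partial assignment checking; your final paragraph supplies the explicit enumeration argument that justifies this claim.
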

\begin{proof}
  Let $P =\tup{V, C} \in \mathcal P$. We construct $P'$ by adding to
  every $e[\delta] \in C$ such that $e[\delta] \not\in \Gamma_{PAC}$,
  with back door set $W$, the set of assignments $\sol(\Pj_W(P \cap
  \Gamma_{PAC}))$, which we can obtain using
  \cref{alg:enum-solutions}. By \cref{def:sparse-back-door-cover}, we
  have for every $X \subseteq W$ that $|\sol(\Pj_W(P \cap
  \Gamma_{PAC}))| \leq |P|^c$, so \cref{alg:enum-solutions} takes
  polynomial time since $\Gamma_{PAC}$ does allow partial assignment
  checking.

  It is clear that $\hyp(P') = \hyp(P)$, and since $\sol(\Pj_W(P \cap
  \Gamma_{PAC})) \supseteq \pi_W(\sol(P))$, the set of solutions stays
  the same, i.e.~$\sol(P') = \sol(P)$. Finally, since we have replaced
  each constraint $e[\delta]$ in $P$ that was not in $\Gamma_{PAC}$ by
  a constraint that does allow partial assignment checking, it follows
  that $P'$ is over a \languageword{} that allows partial assignment
  checking.
\end{proof}

One consequence of \cref{lemma:backdoors} is that we can sometimes
apply \cref{thm:decomp-to-classic} to a CSP instance that contains a
constraint for which checking if a partial assignment can be extended
to a satisfying one is hard. We can do so when the variables of that
constraint are covered by the variables of other constraints that do
allow partial assignment checking --- but only if the instance given
by those constraints has few solutions.

As a concrete example of this, consider again the encoding of the CGP
that we gave in \cref{example:cgp-as-csp}. The variables of constraint
$C^\alpha$ are entirely covered by the instance $P'$ obtained by
removing $C^\alpha$. As the entire set of variables of a constraint is
a back door set for it, and the instance $P'$ has few solutions
(cf.~\cref{sect:cgp-app}), this class of instances has sparse back
door cover. As such, the constraint $C^\alpha$ could, in fact, be
arbitrary without affecting the tractability of this problem. In
particular, the requirement that $C^\alpha$ allows partial assignment
checking can be dropped.

\section{Summary and Future Work}

In this paper, we have investigated properties that many structural
restrictions rely on to yield tractable classes of CSP instances with
explicitly represented constraints. In particular, we identify a
relationship between the number of solutions and the size of a CSP
instance as being one such property. Using this insight, we show that
known structural restrictions yield tractability for any class of CSP
instances with global constraints that satisfies this property. In
particular, the above implies that the structural restrictions we
consider yield tractability for classes of instances where every
global constraint has few satisfying assignments relative to its size.

To illustrate our result, we apply it to a known problem, the
connected graph partition problem, and use it to identify a new
tractable case of this problem. We also demonstrate how the concept of
back doors, subsets of variables that make a problem easy to solve
once assigned, can be used to relax the conditions of our result in
some cases.

As for future work, one obvious research direction to pursue is to
find a complete characterization of tractable classes of CSP instances
with sparse intersections. Another avenue of research would be to
apply the results in this paper to various kinds of valued CSP.

%\bibliographystyle{splncs03}
%\bibliography{mainrefs-thesis-copy}

\end{document}